\newcommand{\tss}{\mathrm{TSS}}
\newcommand{\mymin}[1]{\min_{\scalebox{1.0}{$#1$}}}
\newcommand{\myargmin}[1]{\min_{\scalebox{1.0}{$#1$}}}
\newcommand{\Graph}{\mathcal{G}} 
\newcommand{\Y}{\mathcal{Y}} 
\newcommand{\D}{\mathcal{D}} 
\newcommand{\I}{\mathbb{I}} 
\renewcommand{\S}{\mathcal{S}} 
\renewcommand{\H}{\mathcal{H}} 
\def\thickhline{%
  \noalign{\ifnum0=`}\fi\hrule \@height \thickarrayrulewidth \futurelet
   \reserved@a\@xthickhline}
\def\@xthickhline{\ifx\reserved@a\thickhline
               \vskip\doublerulesep
               \vskip-\thickarrayrulewidth
             \fi
      \ifnum0=`{\fi}}
\newlength{\thickarrayrulewidth}
\renewcommand{\v}[1]{\textcircled{\raisebox{-0.7pt}{\footnotesize{#1}}}}
\newcommand{\e}{\text{--}}
\newcommand{\gxxx}[3]{\v{#1}\e\v{#2}\e\v{#3}}
\renewcommand{\ss}[2]{\addstackgap{\shortstack{#1\\#2}}}
\newcommand{\fst}[1]{{\bf #1}}
\newcommand{\fss}[2]{\addstackgap{\shortstack{\bf #1\\#2}}}
\newcommand{\TODO}[1]{\textcolor{green!50!black}{{\footnotesize TODO }#1}}
\newcommand{\COMMENT}[1]{\textcolor{orange!80!black}{{\footnotesize Comment: }#1}}
\renewcommand{\TODO}[1]{} \renewcommand{\COMMENT}[1]{} 
\newcommand{\SKIP}[1]{#1}
\renewcommand{\SKIP}[1]{} 
\begin{document}
\title{Jointly learning relevant subgraph patterns and nonlinear models of their indicators}
\titlenote{The present study was supported in part by JSPS/MEXT KAKENHI \#17H01783, \#17K19953, and JST PRESTO \#JPMJPR15N9.}

\author{Ryo Shirakawa}
\affiliation{%
  \institution{Graduate School of Information Science and Technology, Hokkaido University}
}
\email{sira@art.ist.hokudai.ac.jp}

\author{Yusei Yokoyama}
\affiliation{%
  \institution{Research \& Development Group, Hitachi, Ltd}
  \institution{Graduate School of Information Science and Technology, Hokkaido University}
}
\email{yusei.yokoyama.qk@hitachi.com}

\author{Fumiya Okazaki}
\affiliation{%
  \institution{Graduate School of Information Science and Technology, Hokkaido University}
}
\email{fokazaki.w.h.i@gmail.com}

\author{Ichigaku Takigawa}
\affiliation{%
  \institution{Graduate School of Information Science and Technology, Hokkaido University}
  \institution{PRESTO, Japan Science and Technology Agency (JST)}
}
\email{takigawa@ist.hokudai.ac.jp}

\begin{abstract} 
  Classification and regression in which the inputs are graphs of arbitrary size and shape have been paid attention in various fields such as computational chemistry and bioinformatics. Subgraph indicators are often used as the most fundamental features, but the number of possible subgraph patterns are intractably large due to the combinatorial explosion. We propose a novel efficient algorithm to jointly learn relevant subgraph patterns and nonlinear models of their indicators. Previous methods for such joint learning of subgraph features and models are based on search for single best subgraph features with specific pruning and boosting procedures of adding their indicators one by one, which result in linear models of subgraph indicators. In contrast, the proposed approach is based on directly learning regression trees for graph inputs using a newly derived bound of the total sum of squares for data partitions by a given subgraph feature, and thus can learn nonlinear models through standard gradient boosting. An illustrative example we call the Graph-XOR problem to consider nonlinearity, numerical experiments with real datasets, and scalability comparisons to na\"ive approaches using explicit pattern enumeration are also presented.
\end{abstract}

\keywords{Graph classification and regression, subgraph pattern mining, nonlinear supervised learning}

\settopmatter{printacmref=false}
\maketitle

\section{Introduction}

Graphs are fundamental data structures for representing combinatorial
objects. However, precisely because of their combinatorial nature, it is usually
difficult to understand the underlying trends in large datasets of
graphs. The rapid increase in data in recent years also includes data
represented as graphs, and thus supervised learning in which the inputs are graphs of
arbitrary size and shape has gained considerable attention.
This problem commonly arises in diverse fields such as cheminformatics
\cite{Kashima:2003, Tsuda:2007,Saigo:2008a, Saigo:2009,Mahe:2009, Vishwanathan:2010, Shrvashidze:2011, Takigawa:2013, takigawa:2017}, and bioinformatics
\cite{Borgwardt:2005, Karklin:2005, Takigawa:2011b} as well as wide computer-science applications such as computer vision
\cite{Harchaoui:2007, Nowozin:2007, Barra:2013, Bai:2014a}, and natural language processing
\cite{Kudo:2005}. 

The present paper investigates the supervised learning of a function $f: \Graph \to \Y$
from finite pairs of input graphs and output values,
where $\Graph$ is a set of graphs
and $\Y$ is a label space such as $\{-1,+1\}$ and $\mathbb{R}$. In general settings, 
the most fundamental and widely used features are indicators of subgraph patterns.
Since the number of possible subgraph patterns are intractably large due to the combinatorial explosion, 
we need to use a heuristically limited class of subgraph patterns or to search for relevant patterns during the learning phase.

In addition to extensive studies on \textit{graph kernels} 
\cite{Kashima:2003, Borgwardt:2005, Harchaoui:2007,Mahe:2009, Vishwanathan:2010, Shrvashidze:2011,Barra:2013,Bai:2014a},
joint learning of relevant subgraph patterns and classification/regression models by their indicators
has also been developed \cite{Kudo:2005, Nowozin:2007,Saigo:2008a,Saigo:2009,takigawa:2017}.
This approach would not overlook any important features, but need some technical tricks to efficiently
search for relevant subgraph patterns from combinatorially huge candidates.
The previous methods use $\ell_1$ regularization for \textit{linear} models of all possible subgraph indicators, and thus can select relevant subgraph patterns. In contrast, any practical graph kernels are based on all subgraphs in a predefined class,
and do not try to select some relevant subsets of subgraph features. Note that the \textit{all-subgraphs} 
kernel is known to be theoretically hard\cite{Gartner:2003}.

In the present paper, 
we investigate \textit{nonlinear} models with all possible
subgraph indicators. The following are the contributions of the present study:
\begin{itemize}
\item
  We present two lesser-recognized facts to make sure the
  difference between linear and nonlinear models of substructural indicators:
  (1) For a closely related problem of supervised learning from
  itemsets, the hypothesis space of the nonlinear model of
  all possible sub-itemset indicators is equivalent to that
  of the linear model; (2) Nevertheless, for the indicators
  of connected subgraphs, the hypothesis space of the
  nonlinear model is strictly larger than that of the linear
  model. {\bf (Section \ref{sec:linearNonlinear})}
\item
  We develop a novel efficient supervised learning algorithm for joint learning of all relevant subgraph
  features and a nonlinear models of their indicators. Unlike existing approaches based on
  $\ell_1$-regularized linear models, the proposed algorithm
  is based on gradient tree boosting with base regression
  trees selecting each splitter out of all subgraph
  indicators with an efficient pruning based on the new bound
  in Theorem
  \ref{thm:bound}. {\bf (Section \ref{sec:proposed})}
\item
  We empirically demonstrate that
  (i) for the Graph-XOR dataset, the proposed nonlinear
  method actually outperforms several linear methods, which
  implies the existence of problems requiring nonlinear
  hypotheses, (ii) For several real datasets, we also
  observe similar superiority of the nonlinear models for some datasets, while it also turns out that
  the performance of linear models is fairly comparable for some datasets.
  {\bf (Section \ref{sec:experi})}
\end{itemize}

\subsection{Related Research and Our Motivation}
\label{sec:previousModel}

Although not discussed explicitly, most previous studies
\cite{Kudo:2005, Nowozin:2007,Saigo:2008a,Saigo:2009,takigawa:2017}
yielded linear models with respect to subgraph indicators as Boolean variables. However,
this would not be obvious at first glance because these studies were based on \textit{boosting} such as Adaboost
\cite{Kudo:2005} and LPBoost \cite{Saigo:2009}, which are usually expected
to produce nonlinear models. However, this is not the case because these studies used \textit{decision stumps} with respect to a single subgraph feature as base learners. 

The research of the present paper starts with our observation that replacing the decision
stumps in these existing methods with decision trees is far from straightforward. This is because the previous methods are based on efficient pruning with specifically derived bounds to find a single best subgraph pattern, and use the indicator as a base learner at each iteration.

One na\"ive method to obtain nonlinear models of subgraph indicators is to enumerate some candidate subgraphs from training graphs, explicitly construct 0-1 indicator-feature vectors of test graphs by solving subgraph-isomorphism directly, and apply a general nonlinear supervised learning to those feature vectors. The performance with all small-size subgraphs occurred in the given graphs is known to be comparable for cheminformatics datasets\cite{Wale:2008}. However these approaches would not scale well as we see later in Section \ref{sec:scalability}. Another good known heuristic idea is to use $r$-neighborhood subgraphs with radius $r$ at each node as seen in ECFP\cite{Rogers:2010} and graph convolutions\cite{Kearnes2016,gilmer:2017}. Unfortunately, the complete enumeration would not scale well either in this case, and usually requires some tricks such as feature hashing, feature folding, or feature embedding through neural nets, all of which are very interesting approaches but beyond the scope of this paper.

Note that it is not difficult to use the number of occurrences of subgraph $g$ in $G$ as
features instead of just 0-1 subgraph
indicators. Although not discussed herein, this case can be
investigated as a weighted version of indicators, and similar properties
would hold. 

\section{Preliminaries}

\subsection{Notations}

Let $[n]$ be $\{1,2,\dots,n\}$, and let $\I(P)$ denote the
indicator of $P$, i.e., $\I(P)=1$ if $P$ is
true, else $0$. We denote as $G \sqsupseteq g$ the
subgraph isomorphism that $G$ contains a subgraph that is isomorphic to
$g$ and its negation as $G \not\sqsupseteq g$. Thus, a subgraph
indicator $\I(G \sqsupseteq g) = 1$ if $G \sqsupseteq g$, otherwise 0.
We also denote the training set of input graphs $G_i \in \Graph$ and output responses $y_i \in \Y$ as
\begin{equation}
  \label{eq:train_data}
  \D = \{(G_1,y_1),(G_2,y_2),\dots,(G_N,y_N)\}, 
\end{equation}
where $\Graph$ is a set of all finite-size, connected, discretely-labeled,
undirected graphs. We denote $\Graph_N = \{G_i\mid i \in [N]\}$, 
and the set of all possible connected subgraphs as $\S_N = \bigcup_{G \in \Graph_N} \{g \mid G \sqsupseteq g\}$.


\subsection{Search Space for Subgraphs}
\label{sec:subgraphMining}

In supervised learning from graphs, we represent each input graph $G_i \in
\Graph_N$ by the characteristic vector $(\I(G_i \sqsupseteq g) \mid g \in
\S) $ with a set $\S$ of relevant subgraph features. However, since $\S$ is not
explicitly available when the learning phase starts, we need to
jointly search and construct $\S$ during the learning process.
In order to define an efficient search space for $\S_N$, i.e., any subgraphs occurring in $\Graph_n$,
the techniques for \textit{frequent subgraph mining}, which enumerates all
subgraphs that appear in more than $m$ input graphs for a given $m$, are
useful. Note that any subgraph feature
$g \in \S_N$ can occur multiple times at multiple locations in a single graph,
but $\I(G_i \sqsupseteq g) = 1$.

In the present paper, we use the search space of the gSpan algorithm \cite{Yan:2002},
which performs a depth-first search on the tree-shaped search spaces on $\S_N$,
referred to collectively as an \textit{enumeration tree}, as shown in Figure~\ref{fig:enum}.
Each node of the enumeration tree holds a subgraph feature $g'$ that
extends the subgraph feature $g$ at the parent node by one edge, namely, $ g' \sqsupseteq g $.
\begin{figure}[t]
  \centering
  \includegraphics[width=0.6\linewidth]{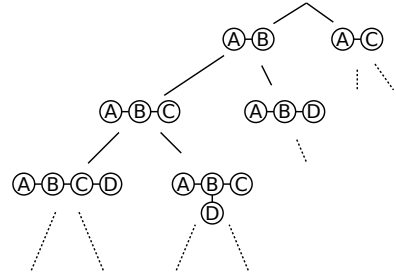}
  \caption{An enumeration tree}
  \label{fig:enum}
\end{figure}
The following \textit{anti-monotone} \TODO{anti-monotone?} property of
subgraph isomorphism over the enumeration tree on $\S_N$ can be used to
derive the efficient search-space pruning of the gSpan algorithm:
\begin{equation}
  \label{eq:propSubgraph}
  G_i \not\sqsupseteq g \Rightarrow G_i \not\sqsupseteq g'
  \quad \text{for} \quad
  g' \sqsupseteq g . 
\end{equation}

\subsection{Gradient Tree Boosting}
\label{sec:GTB}

Gradient tree boosting (GTB) \cite{friedman2001greedy,
  mason2000boosting} is a general algorithm for supervised learning to
predict a response $y$ from a predictor $\bm{x}$. 
For a given hypothesis space $\H$, the goal is to minimize the empirical
risk $L(f) = N^{-1}\sum_{i \in [N]} \ell(y_i,f(\bm{x}_i))$ of $f \in
\H$, which is the average of a loss function $\ell(y,f(\bm{x}))$ over
the training data $\{(\bm{x}_i, y_i)\}_{i \in[N]}$.

GTB is an additive ensemble model of regression trees
$T_i(\bm{x})$ of the form for fixed-stepsize cases:
\[
f_k(\bm{x}) = T_0 + \eta \sum _{i \in [k]} T_i(\bm{x})
\]
where $T_0$ is the mean of response variables in the training data,
$\eta$ is the stepsize, and $T_i(\bm{x})$ is the $i$-th regression
tree as a base learner. To fit the model to the training data, GTB
performs the following gradient-descent-like iterations as a boosting
procedure:
\begin{eqnarray*}
  f_0(\bm{x}) \leftarrow \arg \min_c \sum_{i \in [N]} \ell(y_i, c), \\
  \quad\text{and}\quad
  f_k(\bm{x}) \leftarrow f_{k-1}(\bm{x}) + \eta \, T_k(\bm{x}),
\end{eqnarray*}
where $T_k(\bm{x})$ is a regression tree to best approximate the values of negative functional
gradient $-\nabla_{f_{k-1}} L(f_{k-1}) $ at $f_{k-1}$ obtained by
fitting a regression tree to the data $\left\{\left(
\bm{x}_i, r_i
\right)\right\}_{i \in [N]}$, where
\begin{equation*}
  r_i =
  -\left[\frac{\partial\, \ell(y_i,
      f(\bm{x}_i))}{\partial
      f(\bm{x}_i)}\right]_{f(\bm{x})=f_{k-1}(\bm{x}_i)}
\end{equation*}
Our experiments focus on binary classification tasks with $y \in
\{-1,+1\}$, and thus we use the logistic loss $\ell(y, \mu) = \log
(1+\exp(-2 y \mu))$. Note that even for classification, GTB must fit
regression trees instead of classification trees in order to approximate
real-valued functions.

The primary hyperparameters of GTB that we consider are
the following three parameters: a max tree-depth $d$, a stepsize $\eta$,
and the number of trees $k$.

\subsection{Regression Trees}

The internal regression-tree fitting is performed by the recursive
partitioning below:
\begin{enumerate}
\item
  Each node in the regression tree receives a subset $\D' \subseteq \D$ from the parent node.
\item
  If a terminal condition is satisfied,
  the node becomes a leaf decision node with a prediction
  value by the average of the response values in $\D'$.
\item
  Otherwise, the node becomes an internal node that tries
  to find the best partition of $\D'$ to $\D_1$ and
  $\D_0 = \D' \setminus \D_1$ that minimizes the total
  sum of squares of residual error $r$:
  \begin{equation*}
    \myargmin{\D_1, \D_0} \bigl[ \tss(\D_1) + \tss(\D_0) \bigr] . 
  \end{equation*}
  The subsets $\D_1$ and $\D_0$ are further sent to the child nodes, and
  Step (1) is then recursively applied to each subset at the child nodes.
\end{enumerate}
$\tss$ here is the total sum of squares of residual error $r$:
\begin{equation}
  \label{eq:tss} 
  \tss(\D) = \frac{1}{2} \sum_{i \in [N]}
  (r_i - \bar{r})^2, \,\, \bar{r} = \frac{1}{N} \sum_{i \in [N]}
  r_i. 
\end{equation}

\section{Problem Setting and Challenges}

Our goal is learning a nonlinear model $f$ over all possible subgraph
indicators $\I(G \sqsupseteq g)$ for $g \in \S_N$. As we will see in Section
\ref{sec:linearNonlinear}, arbitrary functions of subgraph indicators
have a unique multi-linear polynomial form 
\begin{equation*}
  f(G) = \sum_{S \subseteq \S_N} c_S \prod_{g \in S} \I(G \sqsupseteq g) . 
\end{equation*}
Input graphs are implicitly represented as a bag of subgraph features,
and hence as feature vectors in which the elements are an intractably
large number of binary variables of each subgraph indicator. 
The main challenge is how
to learn the relevant features $g$ from such combinatorially large space $\S_N$
with also jointly learning the classifier $f$ over those
features $\I(G \sqsupseteq g)$ for $g \in \S_N$. Other technical challenges are (1) feature vectors are binary
valued and takes finite discrete values only at the vertices of a very
high-dimensional Boolean hypercube; (2) feature vectors are strongly
correlated due to subgraph isomorphism.

\section{Pseudo-Boolean Functions of Substructural Indicators}
\label{sec:linearNonlinear}

\newcommand{\FL}{\mathcal{F}_L}
\newcommand{\FNL}{\mathcal{F}_{NL}}
\newcommand{\fL}{f_L}
\newcommand{\fNL}{f_{NL}}

We first investigate the difference between linear and nonlinear models.
Any subgraph indicator $\I(G \sqsupseteq g)$ is a 0-1 Boolean
variable, and thus the hypothesis space that we can consider with respect to
these variables is a family of \textit{pseudo-Boolean functions},
regardless of whether they are linear or nonlinear.
A real-valued function $f: \{0,1\}^d \to \mathbb{R}$ on the Boolean hypercube
$\{0,1\}^d$ is called pseudo-Boolean. 

In this section, we explain
the inequivalence of linear and nonlinear models of all possible
subgraph indicators. Theorem~\ref{thm:linear_nonlinear}, contrasting the
difference from closely related problems for itemsets, suggests
an advantage of the proposed nonlinear approach, and an illustrative
example we call Graph-XOR indicating that linear models cannot learn is
presented in Section \ref{sec:artiExperi}.
\begin{theorem}
  \label{thm:linear_nonlinear}
  (1)  The hypothesis space of the nonlinear model of
  all possible sub-itemset indicators is equivalent to that
  of the linear model. (2) The hypothesis space of the
  nonlinear model of all possible connected subgraph
  indicators is strictly larger than that of the linear model.
\end{theorem}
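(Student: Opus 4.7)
The multiplicativity identity $\I(T \supseteq S_1)\,\I(T \supseteq S_2) = \I(T \supseteq S_1 \cup S_2)$ holds since a transaction contains two itemsets iff it contains their union, and the union is itself an itemset. By induction on degree, every monomial in sub-itemset indicators reduces to a single such indicator, hence every polynomial in them is a linear combination of sub-itemset indicators; the reverse inclusion is trivial, so the nonlinear and linear hypothesis spaces coincide.

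\textbf{Part (2), connected subgraphs.} The analogous identity fails: if $g_1, g_2$ are connected graphs with disjoint vertex-label sets, their disjoint union is disconnected and therefore not a member of $\S_N$, so the product $\I(\cdot \sqsupseteq g_1)\,\I(\cdot \sqsupseteq g_2)$ cannot be identified with any single subgraph indicator. I will take this product, for two labeled edges $g_1, g_2$ on four pairwise-distinct labels, as the candidate witness lying in the nonlinear space but outside the linear span.

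Suppose for contradiction that $f := \I(\cdot \sqsupseteq g_1)\,\I(\cdot \sqsupseteq g_2) = \sum_{g \in S} c_g\,\I(\cdot \sqsupseteq g)$ for some finite $S$, and let $M := \max_{g \in S}|E(g)|$. Build four graphs $G_{ab}$ for $a,b \in \{0,1\}$: each consists of a path of length $L > M$ whose internal vertices carry fresh labels disjoint from those of $g_1, g_2$; if $a = 1$, attach $g_1$ at the left endpoint (otherwise leave that endpoint bare), and symmetrically attach $g_2$ at the right endpoint iff $b = 1$. Then
\[
\Delta f := f(G_{11}) - f(G_{01}) - f(G_{10}) + f(G_{00}) = 1 - 0 - 0 + 0 = 1,
\]
whereas for each $g \in S$, a case analysis on the vertex-label set $L(g)$ shows $\I(G_{ab} \sqsupseteq g)$ depends on at most one of $a, b$: if $L(g)$ meets only $g_1$'s labels (plus possibly middle), embeddings are confined to the left half and $\I$ depends only on $a$; symmetrically for the right side and $b$; if $L(g)$ lies entirely in middle labels, $\I$ is constant; and if $L(g)$ meets both $g_1$'s and $g_2$'s labels, any connected embedding would have to traverse the entire length-$L$ spine, forcing $|E(g)| \ge L > M$, a contradiction. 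Since $\Delta$ annihilates any function of $(a,b)$ depending on at most one coordinate, linearity yields $\Delta f = 0$, the desired contradiction. The main obstacle is the localization step---choosing labels and path length so that the spine-crossing argument genuinely rules out small connected subgraphs from witnessing both endpoints simultaneously; once that is pinned down, the mixed-difference contradiction closes immediately.
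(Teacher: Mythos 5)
Your proof is correct, and for part (2) it goes by a genuinely different (and more complete) route than the paper. For part (1) you and the paper do the same thing: the identity $\prod_{S \in P}\I(S \subseteq I) = \I(\bigcup_{S\in P} S \subseteq I)$ collapses every monomial to a single sub-itemset indicator, which is exactly the chain of equalities in Section 4.1. For part (2), however, the paper does not actually prove strict containment: it writes down $f_L$ and $f_{NL}$, observes that their difference consists of the degree-$\geq 2$ terms, and simply asserts that this difference ``is not constantly zero,'' deferring the real evidence to the empirical Graph-XOR construction of Section 6.1. You instead exhibit an explicit witness --- the product $\I(\cdot \sqsupseteq g_1)\,\I(\cdot \sqsupseteq g_2)$ for two label-disjoint edges --- and prove it lies outside the linear span via a mixed second difference over four graphs $G_{ab}$ joined by a long, freshly labeled spine: any single connected subgraph indicator with at most $M$ edges depends on at most one of the two coordinates $(a,b)$ (a connected pattern touching both end-gadgets would have to contain the whole spine of length $L>M$), so the alternating sum kills every linear combination while it evaluates to $1$ on the product. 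This is the same intuition that motivates Graph-XOR (connectivity prevents a single pattern from witnessing a conjunction of distant features), but your version turns it into an actual proof rather than an illustration; what the paper's presentation buys in exchange is brevity and a dataset that doubles as an experiment. Two small points worth pinning down if you write this up: the maximum $M$ exists because the linear model has finite support (true in the paper's setting, where the feature set $\S_N$ is finite), and the four graphs $G_{ab}$ must lie in the domain over which the two hypothesis spaces are being compared --- i.e., you are separating the spaces as functions on all of $\Graph$ (or on any $\Graph_N$ rich enough to contain your witnesses), which is the reading the theorem intends.
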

This result is based on the following fundamental property of
pseudo-Boolean functions.
\begin{lemma}
  \label{le:pseudoBooleanFunc}
  \cite{Hammer1963,hammer1968boolean}
  Every pseudo-Boolean function $f : \{0,1\}^d \to \mathbb{R}$ has
  a unique multi-linear polynomial representation:
  \[
  f(x_1, \dots, x_d) = \sum_{S \subseteq [d]} c_S \prod_{j \in S} x_j,
  \quad x_j \in \{0,1\},\,\, c_S \in \mathbb{R}.
  \]
\end{lemma}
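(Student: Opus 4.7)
The plan is to prove existence and uniqueness separately, tying them together by a simple dimension count on the vector space of pseudo-Boolean functions $\mathbb{R}^{\{0,1\}^d}$, which has dimension $2^d$ (since such a function is determined by its values on the $2^d$ vertices of the Boolean hypercube). On the algebraic side, the family of multi-linear monomials $\{\prod_{j \in S} x_j : S \subseteq [d]\}$ also has cardinality $2^d$. So if the linear map $\Phi : \mathbb{R}^{2^d} \to \mathbb{R}^{\{0,1\}^d}$ sending a coefficient tuple $(c_S)_{S \subseteq [d]}$ to the function $x \mapsto \sum_S c_S \prod_{j \in S} x_j$ is shown to be surjective, then by equality of dimensions it is automatically injective, giving uniqueness for free.

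For the existence (surjectivity) step, I would exhibit the coefficients explicitly by Möbius inversion on the subset lattice. For each subset $S \subseteq [d]$, define
\[
c_S \,=\, \sum_{T \subseteq S} (-1)^{|S| - |T|} f(\mathbf{1}_T),
\]
where $\mathbf{1}_T \in \{0,1\}^d$ is the characteristic vector of $T$. Then, for any $U \subseteq [d]$, I would evaluate the candidate polynomial at $x = \mathbf{1}_U$ and use the fact that $\prod_{j \in S} (\mathbf{1}_U)_j$ is $1$ exactly when $S \subseteq U$, and $0$ otherwise. This reduces the claim to verifying the combinatorial identity
\[
\sum_{T \subseteq U} f(\mathbf{1}_T) \sum_{S : T \subseteq S \subseteq U} (-1)^{|S| - |T|} \,=\, f(\mathbf{1}_U),
\]
which follows from the standard fact $\sum_{R \subseteq V} (-1)^{|R|} = \I(V = \emptyset)$ applied to $V = U \setminus T$, collapsing the inner sum to $\I(T = U)$.

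With existence in hand, uniqueness follows immediately from the dimension argument above: $\Phi$ is linear and surjective between spaces of equal finite dimension $2^d$, so it is an isomorphism, hence injective. The main obstacle I anticipate is purely bookkeeping in the Möbius-inversion identity; nothing deep is needed beyond the alternating-sum lemma and the bijective correspondence between pseudo-Boolean functions and real-valued functions on $2^{[d]}$. A streamlined alternative would be to argue uniqueness first by induction on $d$ (peeling off $x_d = 0$ versus $x_d = 1$ to separately pin down coefficients with $d \in S$ and with $d \notin S$), which also yields an explicit recursive construction, but the Möbius-inversion plus dimension-count route is the cleanest and matches the structure of the citation.
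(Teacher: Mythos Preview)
Your proof is correct and standard: the M\"obius-inversion formula for the coefficients gives existence, and the dimension count ($2^d$ monomials spanning a $2^d$-dimensional space) gives uniqueness. Note, however, that the paper does not supply its own proof of this lemma at all; it simply cites the classical references \cite{Hammer1963,hammer1968boolean} and uses the result as a black box, so there is no in-paper argument to compare against.
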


\subsection{Sub-Itemset Indicators}

Let $x_j \in \{0,1\}$ be a Boolean variable defined by $x_j = \I(j \in
I)$ for an item $j \in [d]$ in a itemset $I \subseteq [d]$. Then we can
see that linear and nonlinear models of \textit{sub-itemset indicators}
$\I(S \subseteq I), S \subseteq [d]$ are equivalent as a hypothesis
space on itemsets.
For any function $f: 2^{[d]} \to \mathbb{R}$, we have
\begin{align*}
  f(I) &= \sum_{P \subseteq 2^{[d]}} c_P \prod_{S \in P} \I(S \subseteq I)
  = \sum_{U \in 2^{[d]}} c_U \I(U \subseteq I)\\
  &= \sum_{U \subseteq [d]} c_U \prod_{j \in U}\I(j \in I) 
\end{align*}
where $U = \bigcup_{S \in P} S $. Theorem \ref{thm:linear_nonlinear} (1)
follows from this simple fact.
Note that
including the negation terms, as in decision tree learning,
does not change the hypothesis space
because it can be represented as $\I(S \not\subseteq I) = 1 - \I(S \subseteq I)$.

\subsection{Connected-Subgraph Indicators}
\label{sec:ConnectedGraph}

As for subgraph indicators $\I(G \sqsupseteq g)$, the standard setting
implicitly assumes that subgraph feature $g$ is a \textit{connected}
graph. This would be primarily because the complete search for subgraph
patterns, including disconnected graphs, is practically impossible, given
that even a set of all connected graphs $\S_N$ in $\Graph_N$,
is already intractably huge in practice.

The difference between linear model $\fL(G)$ and nonlinear model $\fNL(G)$ is not constantly zero:
\begin{eqnarray*}
  \label{eq:ConnectedGraphLinear}
  \fL(G) &=&  c_0 + \sum_{g \in \S_N} c_g \I(G \sqsupseteq g) \\
  \fNL(G) &=& c_0 + \sum_{g \in \S_N} c_g \I(G \sqsupseteq g) \\
  ~ &~& + \sum_{S \subseteq \S_N, |S| \geqslant 2} c_S \prod_{g \in S} \I(G \sqsupseteq g)
\end{eqnarray*}
from which Theorem \ref{thm:linear_nonlinear} (2) follows.
This also implies that if we consider \textit{connected-subgraph-set indicators}
for the co-occurrence of several connected subgraphs,
then the linear model is equivalent to the nonlinear model as a
hypothesis space, and more importantly it is identical to $\fNL(G)$,
which is the hypothesis space
covered by the proposed algorithm in Section \ref{sec:proposed}.
Note that connected-subgraph-set indicators differ from
the indicators of general subgraphs, including disconnected-subgraph-set indicators, because
any subgraph feature $g \in \S$ can occur multiple times at different
partially overlapped locations in a single graph. The hypothesis space
by general subgraph indicators is beyond the scope of the present paper,
and, in practice, the complete search for such indicators is computationally too challenging.

\section{Proposed Method}
\label{sec:proposed}

In this section, we present a novel efficient method to produce a nonlinear
prediction model based on gradient tree boosting with all possible
subgraph indicators. Existing boosting-based methods \cite{Kudo:2005, Nowozin:2007, Saigo:2009} are based on simple but
efficiently searchable base-learners of decision stumps (equivalent to
subgraph indicators, as demonstrated previously) and construct an efficient
pruning algorithm for this single best subgraph search at each iteration.
In contrast, the proposed approach involves this subgraph
search at finding
an optimal split at each internal node of
regression trees, while keeping the other outer loops the same as in
GTB, as explained in Section \ref{sec:GTB}. More specifically, we need to
efficiently perform the following optimization over all possible
subgraphs in $\S_N$:
\begin{equation}
  \label{eq:internal}
  \myargmin{g \in \S_N} \Bigl[ \tss(\D_1(g)) + \tss(\D_0(g)) \Bigr]
\end{equation}
where $\tss$ is the total sum of squares defined as \eqref{eq:tss},
$\D_1(g) = \{ (G_i, r_i) \in \D \mid G \sqsupseteq g \} $ and
$\D_0(g) = \{ (G_i, r_i) \in \D \mid G \not\sqsupseteq g \} $.

Here, $|\S_N|$ is too intractably huge to solve (\ref{eq:internal}) by
exhaustively testing subgraph $g \in \S_N$ in order, and thus we perform
a branch and bound search over the enumerate tree on $\S_N$ with the
following lower bound for the total sum of squares of expanded subgraphs:
\begin{theorem}
  \label{thm:bound}
  Given $\D_1(g)$ and $\D_0(g)$, for any subgraph $g'
  \sqsupseteq g$,
  \begin{multline}
    \label{eq:bound}
    \tss(\D_1(g')) + \tss(\D_0(g')) \geq \\
    \mymin{(\diamond,k)} \Big[ \tss(\D_1(g) \setminus S_{\diamond, k}) + \tss(\D_0(g) \cup S_{\diamond, k}) \Big]
  \end{multline}
  where $ (\diamond, k) \in \{ \leq, > \} \times \{ 2, \dots, |\D_1(g) - 1| \} $,
  and $S_{\diamond, k} \subset \D_1(g)$, such that $S_{\leq, k}$ is
  a set of $k$ pair $(G_i, r_i)$ selected from $\D_1(g)$ in descending order of residual error $r_i$,
  and $S_{>, k}$ is that in increasing order.
  Note that $\setminus$, $\cup$ are set difference and set union respectively.
\end{theorem}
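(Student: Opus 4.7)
The plan is to use anti-monotonicity to re-parametrize extensions $g' \sqsupseteq g$ as subsets of $\D_1(g)$, and then to collapse the combinatorial minimization via the classical between/within-group decomposition (ANOVA) to a scalar convex-quadratic problem.

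First, by property (\ref{eq:propSubgraph}), $g' \sqsupseteq g$ implies $G_i \sqsupseteq g' \Rightarrow G_i \sqsupseteq g$, so $\D_1(g') \subseteq \D_1(g)$. Writing $T := \D_1(g) \setminus \D_1(g')$, we have $\D_1(g') = \D_1(g) \setminus T$ and $\D_0(g') = \D_0(g) \cup T$ for some $T \subseteq \D_1(g)$. It therefore suffices to show that for every $T \subseteq \D_1(g)$ with $k := |T|$,
\[
\tss(\D_1(g) \setminus T) + \tss(\D_0(g) \cup T) \geq \min_{\diamond \in \{\leq,\,>\}} \bigl[ \tss(\D_1(g) \setminus S_{\diamond, k}) + \tss(\D_0(g) \cup S_{\diamond, k}) \bigr];
\]
taking a further minimum over admissible $k$ then yields (\ref{eq:bound}).

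Next I would invoke the identity, valid for any two-part partition $A \sqcup B = M$ of a fixed multiset $M$ of residuals,
\[
\tss(A) + \tss(B) = \tss(M) - \frac{|A|\,|B|}{2\,|M|}\,(\bar{r}_A - \bar{r}_B)^2.
\]
Set $M := \D_1(g) \cup \D_0(g)$ (fixed), $A := \D_1(g) \setminus T$, $B := \D_0(g) \cup T$. Once $k = |T|$ is fixed, $|A|$, $|B|$, $|M|$ are constants, so minimizing $\tss(A) + \tss(B)$ over $T$ of size $k$ is equivalent to maximizing $(\bar{r}_A - \bar{r}_B)^2$. Letting $\sigma(T) := \sum_{(G_i, r_i) \in T} r_i$, a direct computation shows
\[
\bar{r}_A = \frac{C_1 - \sigma(T)}{|\D_1(g)| - k}, \qquad \bar{r}_B = \frac{C_0 + \sigma(T)}{|\D_0(g)| + k},
\]
with $C_1 := \sum_{i \in \D_1(g)} r_i$ and $C_0 := \sum_{i \in \D_0(g)} r_i$. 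Hence $\bar{r}_A - \bar{r}_B$ is affine in the scalar $\sigma(T)$, and $(\bar{r}_A - \bar{r}_B)^2$ is a convex quadratic in $\sigma(T)$, which therefore attains its maximum at the extreme feasible values of $\sigma(T)$ over $k$-element subsets of $\D_1(g)$.

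A standard exchange argument then identifies these extrema: the largest $\sigma(T)$ is achieved by $T = S_{\leq, k}$ (the $k$ residuals of greatest value) and the smallest by $T = S_{>, k}$ (the $k$ residuals of least value). Combining with the ANOVA identity gives the per-$k$ inequality above, and minimizing over $k$ concludes. The principal obstacle is the exponential number of admissible $T$, but the ANOVA identity neutralizes it by compressing the objective into a convex scalar function of $\sigma(T)$, after which a single swap-based rearrangement step suffices. Edge cases $k \in \{0, 1, |\D_1(g)|\}$ correspond to trivial or degenerate extensions (identical or empty $\D_1$-partitions); they lie outside the stated range and can be handled separately or viewed as dominated by the informative interior range of $k$.
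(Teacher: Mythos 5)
Your proof is correct and follows essentially the same strategy as the paper's: anti-monotonicity turns extensions $g' \sqsupseteq g$ into subsets $T \subseteq \D_1(g)$, and for fixed $|T|=k$ the objective collapses to a quadratic in the scalar $\sum_{(G_i,r_i)\in T} r_i$, hence is extremized at the top-$k$ or bottom-$k$ residuals. The only cosmetic difference is that you reach this quadratic through the between/within (ANOVA) decomposition and convexity of $(\bar{r}_A-\bar{r}_B)^2$, whereas the paper expands $\tss(A\setminus S)+\tss(B\cup S)$ directly and reads off the negative leading coefficient in $\sum_i s_i$; you are also more candid than the paper about the excluded $k\in\{0,1,|\D_1(g)|\}$ cases, which neither argument actually covers.
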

\begin{proof}
  The result follows from the property
  \eqref{eq:propSubgraph}. See Appendix \ref{sec:boundLinear} for details.
\end{proof}

The entire procedure of the proposed algorithm is illustrated in
Alg.~\ref{alg:gtb} and \ref{alg:rt}. The novel algorithm
for the optimal subgraph search of \eqref{eq:internal} using the bound
\eqref{eq:bound} is described in detail in Alg.~\ref{alg:proposed}.
In order to solve \eqref{eq:internal}, the proposed algorithm uses a depth-first
search on the enumerate tree over $\S_N$. The procedure at each subgraph \g is as follows:
\begin{enumerate}
\item Calculate the total sum of squares \t $\gets \tss(\D_1(g) +
  \D_0(g))$ of subgraph $g$.
\item Update \mt by \t if \mt $>$ \t.
\item Calculate $\bound$ \eqref{eq:bound} and if \mt $<$
  $\bound$, then prune all child nodes of \g.
\end{enumerate}

\SetKwProg{Fn}{Function}{}{}
\SetKwComment{cmt}{$\triangleright$~}{}
\begin{algorithm2e}[t]
  \caption{Gradient Tree Boosting for Graphs }
  \label{alg:gtb}
  \KwIn{Training data $\D = \{ (G_1,y_1), (G_2,y_2), \dots,
    (G_N,y_N) \} $, and stepsize $\eta$}
  \KwOut{Prediction model $f: G \to \mathcal{Y}$}
  \SetKwFunction{GradientTreeBoosting}{GradientTreeBoosting}
  \SetKwFunction{BuildRegressionTree}{BuildRegressionTree}
  \Fn{\GradientTreeBoosting($\D$)}{
    $f \gets 1/N \sum_{i=1}^{N} y_i$ \;
    \For{$k = $1, 2, \dots}{
      \For{$i \gets 1$ \KwTo $N$}{
	$r_i^{(k)} \gets -\left[\frac{\partial\, \ell(y_i,
	    T(G_i))}{\partial
	    T(G_i)}\right]_{T(G)=T_{k-1}(G_i)}$
      }
      $T_k \leftarrow$ \BuildRegressionTree($\{(G_i,r_i^{(k)})\mid i \in [N]\}$)   \cmt*{Alg.~\ref{alg:rt}}
      $f \leftarrow f + \eta \, T_k$ \;
    }
    \KwRet{$f$}
  }
\end{algorithm2e}

\begin{algorithm2e}[t]
  \caption{Regression Tree Learning for Graphs }
  \label{alg:rt}
  \KwIn{Training data $\D = \{ (G_1,r_1), (G_2,r_2), \dots, (G_N,r_N) \} $}
  \KwOut{Regression tree $T$}
  \SetKwFunction{FindBestSplit}{FindBestSplit}
  \SetKwFunction{BuildRegressionTree}{BuildRegressionTree}
  \Fn{\BuildRegressionTree($\D$)}{
    \eIf{\text{the terminal condition is satisfied}}{
      make a leaf node in $T$ with the mean of $r_i$ \;
    }{
      $g \leftarrow$ \FindBestSplit($\D$)    \cmt*{Alg.~\ref{alg:proposed}}
      make an internal node $v$ in $T$ with $g$ \;
      the left child of $v$ $\gets$ \BuildRegressionTree($\D_1(g)$) \;
      the right child of $v$ $\gets$ \BuildRegressionTree($\D_0(g)$) \;
    }
    \KwRet{T}
  }
\end{algorithm2e}

\begin{algorithm2e}[t]
  \caption{Optimal Subgraph Search for Best Split}
  \label{alg:proposed}
  \KwIn{Training data $\D = \{ (G_1,r_1), (G_2,r_2), \dots, (G_N,r_N) \} $}
  \KwOut{Minimizer subgraph $g^*$ of \eqref{eq:internal}}
  \SetKwFunction{FindBestSplit}{FindBestSplit}
  \Fn{\FindBestSplit($\D$)}{
    \Repeat{the enumeration tree search ends}{
      \g $\leftarrow $ the next node of the enumeration tree by DFS \;
      \t $\leftarrow \tss(\D_1(g)) + \tss(\D_0(g))$ \;
      \If{\t $< \mt$}{
	\mt $\leftarrow$ \t \;
	$g^*$ $\leftarrow$ \g  \;
      }
      $\bound \gets$ $\min_{(\diamond,k)} [ \tss(\D_1(g) \setminus S_{\diamond, k}) + \tss(\D_0(g) \cup S_{\diamond, k}) ]$
      \cmt*{Theorem \ref{thm:bound}}
      \If{\mt $< \bound$}{
	prune all children of \g in the enumeration tree \;
      }
    }
    \KwRet{$g^*$} \;
  }
\end{algorithm2e}

In the entire procedure, the most time-consuming part is the subgraph
search (Alg.~\ref{alg:proposed}), which
is repeatedly called during the learning process. Hence, introducing
memorization, whereby we store expensive calls and return the cached
results when the same pattern occurs again, can considerably speed up
the entire process. First, we can store the result of minimization
$\mymin{g \in \S_N} (\tss(\D_1(g)) + \tss(\D_0(g)))$ for each
already checked $g$ and $\D$. Second, the subgraph search can be
entirely skipped until we need to check any subgraph that has not been
checked in any previous iterations.

\section{Numerical Experiments}
\label{sec:experi}

\subsection{The Graph-XOR Problem}
\label{sec:artiExperi}
The linear separability has long been discussed using the
XOR (or parity in general) example, and we present the
same key example for graphs, referred to as \textit{Graph-XOR}, where
linear models cannot learn the target rule even when noiseless examples are provided.

The Graph-XOR dataset includes 1,035 graphs of seven nodes and six edges, where
506 are positives with $y=+1$ and 529 negatives with $y = -1$. As
illustrated in Figure~\ref{fig:artificialGraph}, each graph is generated
by connecting two subgraphs by one node \v{D}. The component subgraphs
are selected from the 18 types shown in Figure~\ref{fig:parts}, where all
three-node path graphs with candidate nodes \{\v{A}, \v{B},
\v{C}\}, and are randomly classified into two groups. Note that
$\v{A}\e\v{B}\e\v{C}$ is isomorphic to
$\v{C}\e\v{B}\e\v{A}$ and this duplicate redundancy due to the graph
isomorphism is removed. The response value $y$ of a graph is $-1$ if two
subgraphs are selected from the same group, otherwise $+1$.

Table~\ref{tbl:resultArti} shows the performance results for the
Graph-XOR data by two-fold cross validations. 
We use the proposed nonlinear method
and two linear methods, namely,
the proposed algorithm with maximum tree-depth ($d$) $=1$, i.e., with decision stumps,
and a state-of-the-art (but linear) method for graphs, gBoost \cite{Saigo:2009}.
The hyperparameter tuning is performed for the ranges
described in Table~\ref{tbl:paramForArti}, and the best parameters are
also listed in Table~\ref{tbl:resultArti}. Figures~\ref{fig:artiByd} and \ref{fig:artiByx} show the accuracy and loss changes for the test
data with regard to the max tree depth ($d$) and  the max subgraph size ($x$).
Here ``subgraph size'' means the number of edges.

The results shown in Table~\ref{tbl:resultArti} clearly demonstrate that
the linear models, including our model with $d=1$, fail, but the nonlinear
methods work well. This is also theoretically supported
by Theorem \ref{thm:linear_nonlinear}.
Figure~\ref{fig:artiByd} also shows that
only the behavior of $d=1$ differ from those of the other depths. Moreover,
note that this problem at least requires subgraph features of size 2 (i.e., two edges),
but searching excessively large subgraphs results in overfitting,
as we see for $x$ $\geqslant 4$ in Figure~\ref{fig:artiByx}.

\begin{figure}[t]
  \begin{minipage}[c]{0.50\hsize}
    \centering
    \begin{minipage}[t]{0.45\hsize}
      \centering
      \includegraphics[width=0.8\textwidth]{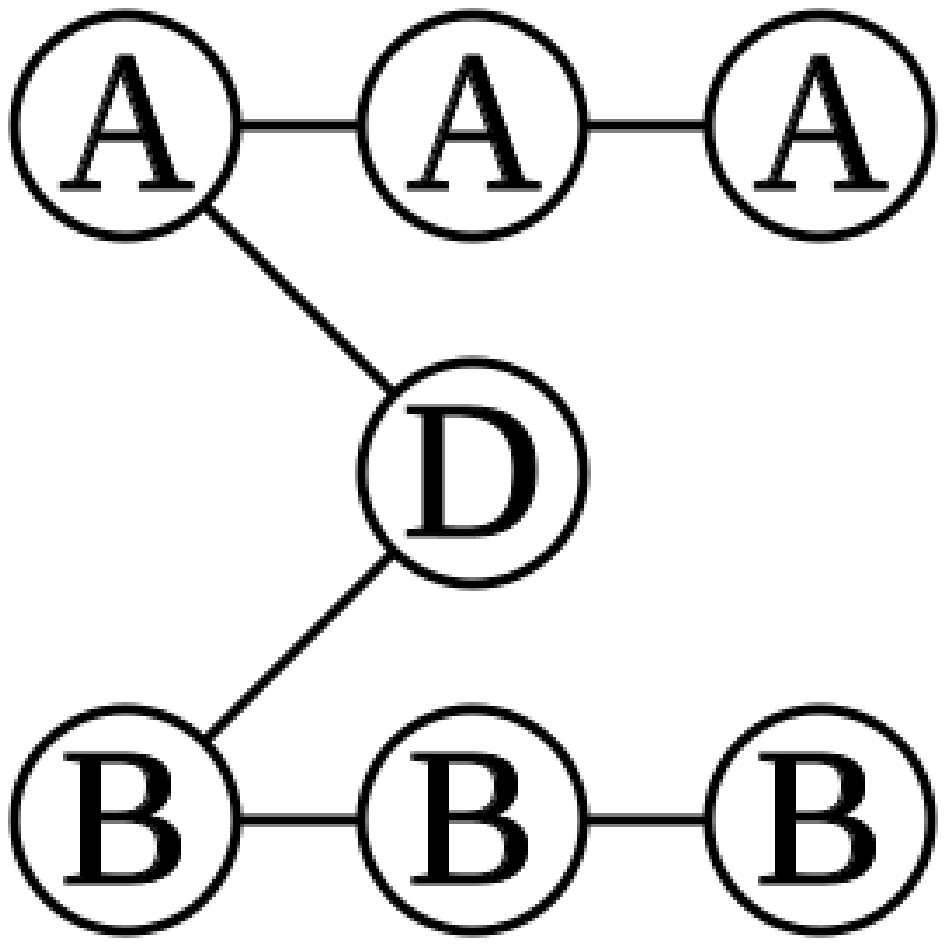} \\
      $y = +1$
    \end{minipage}
    \vspace*{8pt}
    \begin{minipage}[t]{0.45\hsize}
      \centering
      \includegraphics[width=0.8\textwidth]{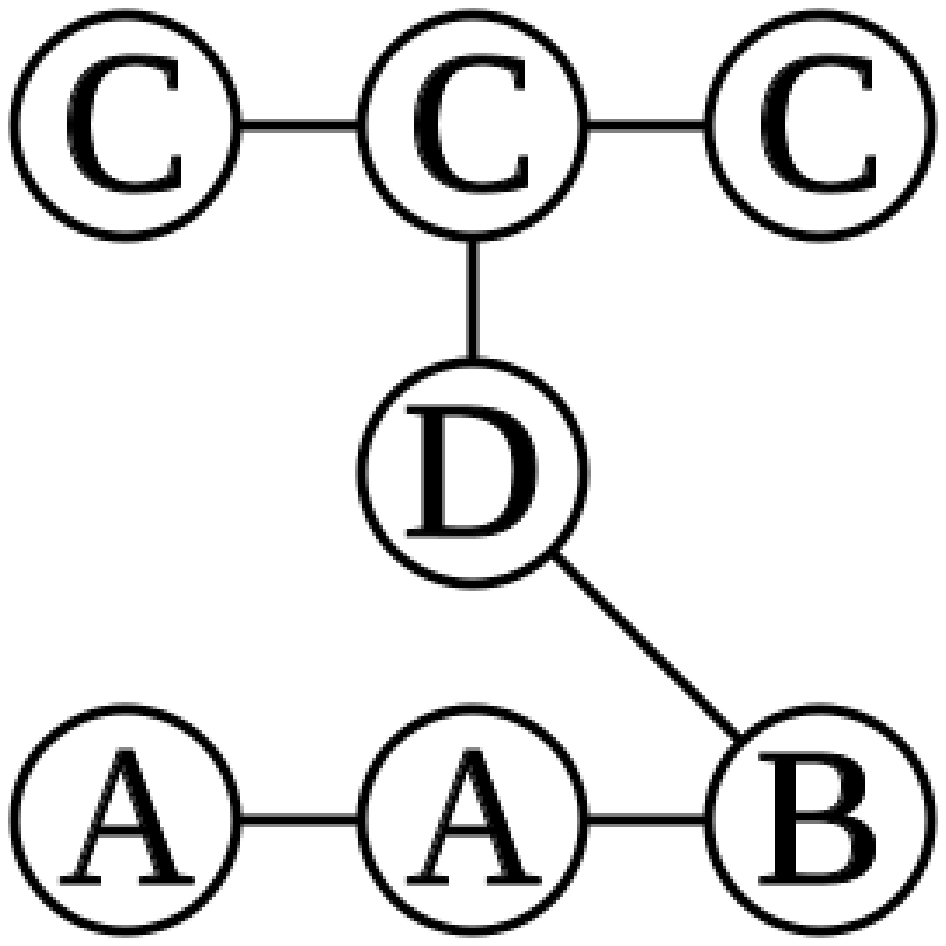} \\
      $y = +1$
    \end{minipage} 
    \begin{minipage}[t]{0.45\hsize}
      \centering
      \includegraphics[width=0.8\textwidth]{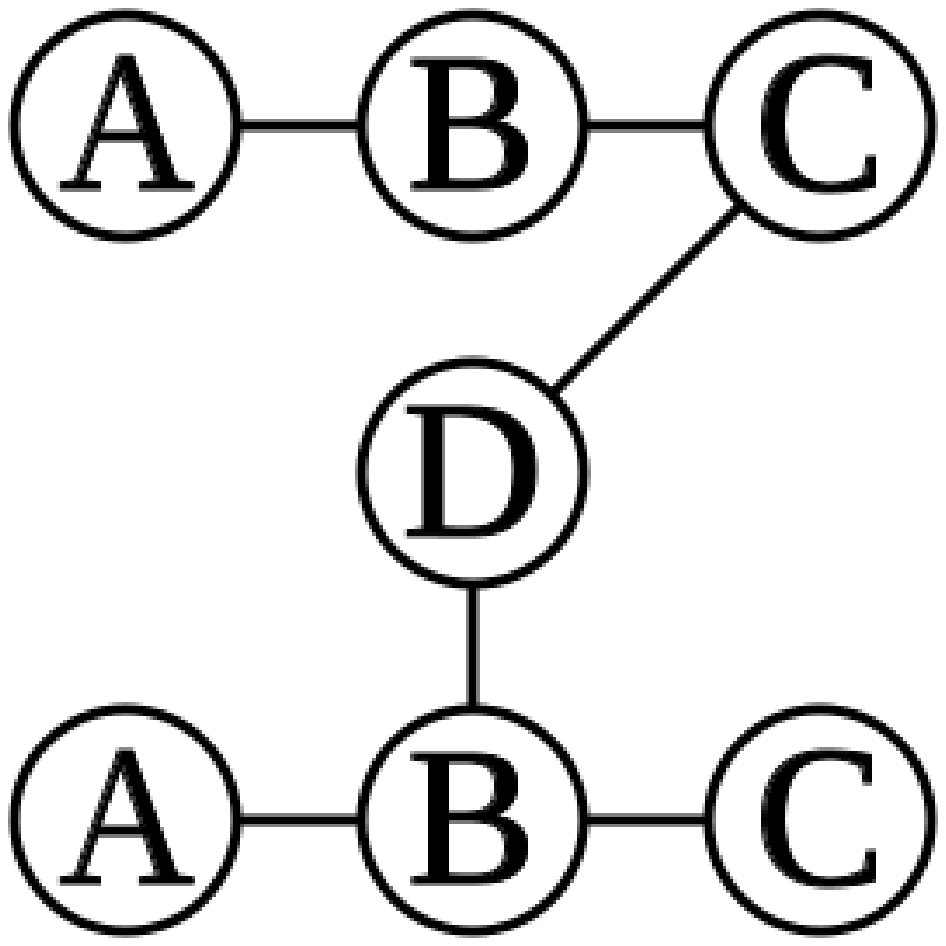} \\
      $y = -1$
    \end{minipage}
    \begin{minipage}[t]{0.45\hsize}
      \centering
      \includegraphics[width=0.8\textwidth]{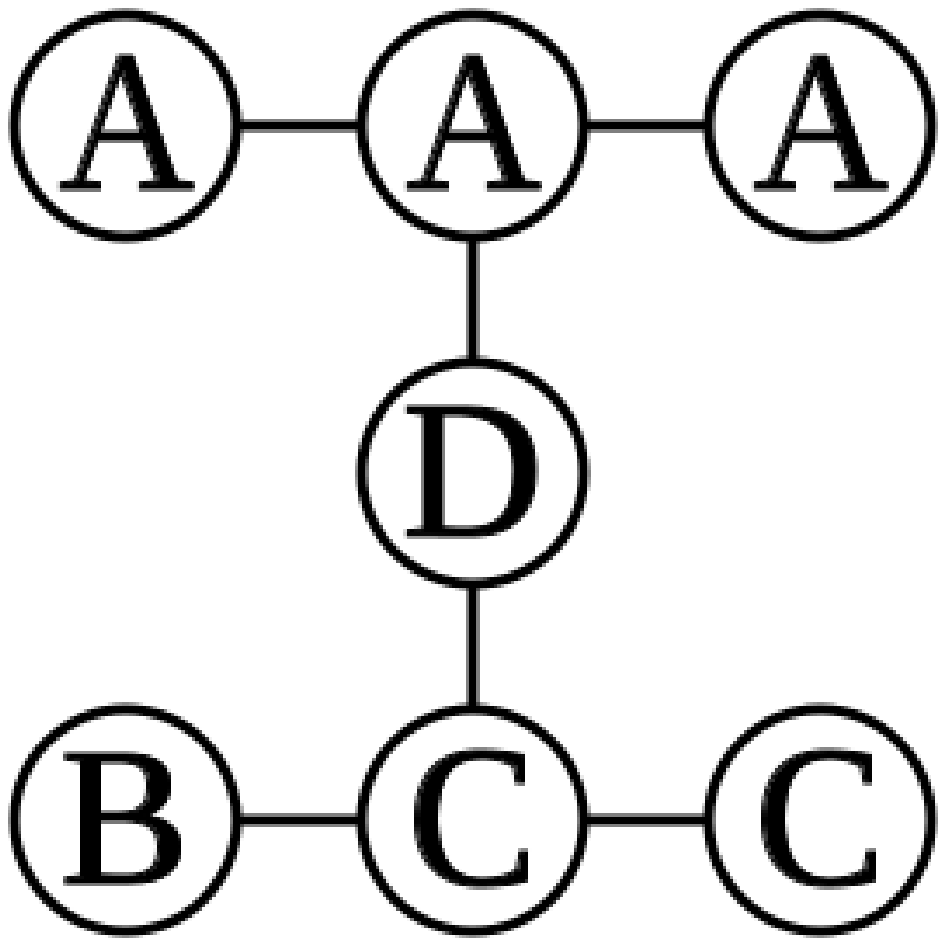} \\
      $y = -1$
    \end{minipage} \\
    \caption{Examples of the Graph-XOR data}
    \label{fig:artificialGraph}
  \end{minipage}
  \begin{minipage}[c]{0.49\hsize}
    \centering
    \begin{tabular}{ll}
      \thickhline
      Group 1 & Group 2 \\
      \hline
      \gxxx{A}{A}{A} & \gxxx{B}{B}{B} \\
      \gxxx{C}{C}{C} & \gxxx{A}{A}{B} \\
      \gxxx{A}{B}{B} & \gxxx{A}{B}{A} \\
      \gxxx{B}{A}{B} & \gxxx{B}{B}{C} \\
      \gxxx{B}{C}{C} & \gxxx{B}{C}{B} \\
      \gxxx{C}{B}{C} & \gxxx{A}{A}{C} \\
      \gxxx{A}{C}{C} & \gxxx{A}{C}{A} \\
      \gxxx{C}{A}{C} & \gxxx{A}{B}{C} \\
      \gxxx{A}{C}{B} & \gxxx{B}{A}{C} \\
      \thickhline
    \end{tabular}
    \caption{Subgraph groups}
    \label{fig:parts}
  \end{minipage}
\end{figure}

\begin{table}[t]
  \centering
  \caption{Hyperparameter settings for Graph-XOR}
  \label{tbl:paramForArti}
  \begin{tabular}{llcl}
    \multicolumn{3}{l}{\textit{Common}} \\
    \thickhline
    \multicolumn{2}{l}{max subgraph size (edges)} & $x$ & 2, 3, 4, 5, $\infty$ \\
    \thickhline
    \multicolumn{3}{l}{} \\
    \multicolumn{3}{l}{\textit{Model specific}} \\
    \thickhline
    Proposed & max tree depth & $d$ & 1, 2, 3, 4, 5 \\
    ~ & stepsize & $\eta$ & 1, 0.7, 0.4, 0.1, 0.01 \\
    ~ & \# trees & $k$ & 1-500 \\
    \hline
    gBoost & regularization & $\nu$ & \ss{0.6, 0.5, 0.4, 0.3, }{0.2, 0.1, 0.01} \\
    \thickhline
  \end{tabular}
\end{table}

\begin{table}[t]
  \centering
  \caption{Prediction accuracy (\%) for the Graph-XOR}
  \label{tbl:resultArti}
  \begin{tabular}{lllll}
    \thickhline
    \multicolumn{1}{l}{\textit{Nonlinear models}} &
    \multicolumn{2}{l}{\textit{Linear models}} \\ \hline
    Proposed & Proposed ($d1$) & gBoost \\
    \hline
    \fst{100.0} & 64.3  & 70.0 \\
    $x2~d2~\eta0.7~k221$  & $x6~d1~\eta0.7~k26$    & $x6~\nu0.01$ \\
    \thickhline
  \end{tabular}
\end{table}
 
\begin{figure}[h!]
    \centering
    \includegraphics[width=0.45\textwidth]{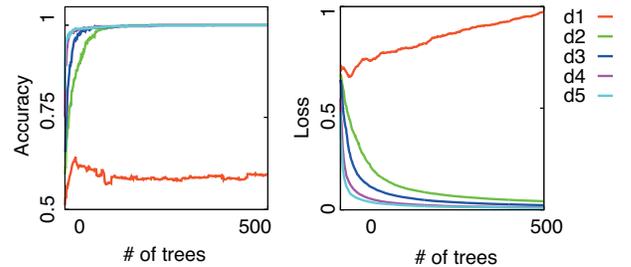}
    \caption{Test accuracy and loss with tree depth $d$}
    \label{fig:artiByd}
\end{figure}

\begin{figure}[h!]
    \centering
    \includegraphics[width=0.45\textwidth]{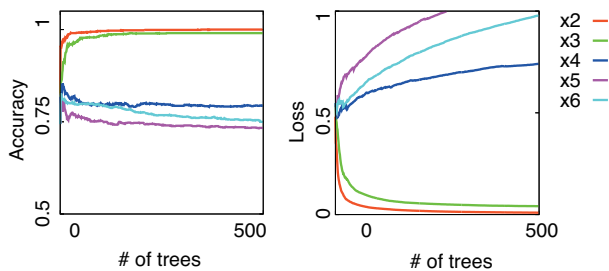}
    \caption{Test accuracy and loss with subgraph size $x$}
    \label{fig:artiByx}
\end{figure}

\subsection{QSAR with Molecular Graphs}

We also evaluate the performance based on the most typical benchmark for
graph classification on real datasets: the quantitative
structure-activity relationship (QSAR) results with molecular graphs. We
select four binary-classification datasets (CPDB, Mutag, NCI1, NCI47) in
Table~\ref{tbl:dataset}: two data (CPDB, Mutag) for
mutagenicity tests and two data (NCI1, NCI47) for tumor growth
inhibition tests from PubChem
BioAssay\footnote{\url{https://pubchem.ncbi.nlm.nih.gov/bioassay/}$\langle\text{AID}\rangle$
  (AID numbers are 1 and 47, respectively)}. NCI1 and NCI47 are balanced by
randomly sampling negatives of the same size as the positives in order to avoid
imbalance difficulty in evaluation. All chemical structures are encoded
as molecular graphs using RDKit\footnote{\url{http://www.rdkit.org/}}, and
some structures in the raw data are removed by chemical sanitization\footnote{
Due to this pre-processing, the number of datasets differs
  from that in the simple molecular graphs in the literature, where the nodes are
  labeled by atom type, and the edges are labeled by bond type.}.
We simply apply a node labeling by the RDKit default atom invariants
(edges not labeled), i.e., atom type, \# of non-H neighbors, \# of Hs,
charge, isotope, and inRing properties. These default atom invariants use
connectivity information similar to that used for the well-known ECFP
family of fingerprints\cite{Rogers:2010}. See \cite{Kearnes2016} for more
elaborate encodings.

Tables~\ref{tbl:acc} and ~\ref{tbl:bestParam} show the performance
results obtained by 10-fold cross validations using the same three methods used
for the Graph-XOR cases with different hyperparameter settings in
Table~\ref{tbl:param}. We can observe that nonlinear methods
often outperform the linear methods. At the same time, we can also
observe, in some cases, that the linear methods work fairly well for the
real datasets. The real datasets would not have explicit classification
rules compared to
noiseless problems such as the Graph-XOR cases. Thus, it is necessary to
tolerate some noises and ambiguity. Although they may seem limited,
linear hypothesis classes are known to be very powerful in such cases,
because they are quite stable estimators and the input features can
themselves include nonlinear features of data as implied in Theorem~\ref{thm:linear_nonlinear}.

We also provide the normalized feature importance scores from GTB and
the search space size in Figure~\ref{fig:FInSS} for the CPDB dataset.
In Figure~\ref{fig:FInSS}, {\it searched} corresponds to the searched subgraphs,
and {\it selected} to the subgraph selected as internal nodes. 
This would also implies that (i) the proposed approach can provide information on selected relevant subgraph
features and (ii) searches and uses only a portion of the entire search space.

\begin{table}[t]
  \centering
  \caption{Dataset summary}
  \label{tbl:dataset}
  \begin{tabular}{lccccc}
    \thickhline
    Dataset                   & \ss{Graph-}{XOR}       & CPDB           & Mutag        & NCI1             & NCI47            \\  \hline
    \# data                & 1035            & 600            & 187          & 4252             & 4202             \\
    \# nodes    & 7               & 13.7           & 17.9         & 26.3             & 26.3             \\  
    \# edges       & 6               & 14.2           & 19.7         & 28.4             & 28.4             \\  \thickhline
  \end{tabular}
  \leftline{\hspace*{22pt} \# of nodes and edges are average.}
\end{table}

\begin{table}[t]
  \centering
  \caption{Hyperparameter settings for the QSAR}
  \label{tbl:param}
  \begin{tabular}{llcl}
    \multicolumn{3}{l}{\textit{Common}} \\
    \thickhline
    \multicolumn{2}{l}{max subgraph size (\# edges)} & $x$ & 4, 6, 8 \\
    \thickhline
    \multicolumn{3}{l}{} \\
    \multicolumn{3}{l}{\textit{Model specific}} \\
    \thickhline
    Proposed & max tree depth & $d$ & 1, 3, 5 \\
    ~ & stepsize & $\eta$ & 1.0, 0.7, 0.4, 0.1 \\
    ~ & \# trees & $k$ & 1-500 \\
    \hline
    gBoost & regularization & $\nu$ & \ss{0.6, 0.5, 0.4, 0.3,}{ 0.2, 0.1, 0.01} \\
    \thickhline
  \end{tabular}
\end{table}

\begin{table*}[t]
  \centering
  \caption{Prediction accuracy (\%) for the QSAR}
  \label{tbl:acc}
  \begin{tabular}{lcccccccc}
    \thickhline
    ~                          &        CPDB                 &        ~                        &        Mutag              &        ~                   &        NCI1                &  ~                        &        NCI47                    &        ~                       \\
    ~                          &        ACC                  &        AUC                      &        ACC                &        AUC                 &        ACC                 &  AUC                      &        ACC                      &        AUC                     \\ \hline
    \multicolumn{9}{l}{\textit{Nonlinear models}} \\
    \ss{Proposed}{~}                   &   \fss{79.3}{($\pm$4.8)}    &   \ss{84.5}{($\pm$3.6)}        &    \ss{87.8}{($\pm$6.6)}  &    \ss{91.6}{($\pm$6.3)}   &   \fss{84.7}{($\pm$1.7)}   &  \fss{90.8}{($\pm$1.3)}   &   \fss{84.5}{($\pm$1.7)}        &   \fss{90.3}{($\pm$1.1)}       \\ \hline
    \multicolumn{9}{l}{\textit{Linear models}} \\
    \ss{Proposed (d1)}{~}              &    \fss{79.3}{($\pm$4.4)}    &    \ss{83.9}{($\pm$3.3)}        &   \ss{87.8}{($\pm$6.6)}  &   \ss{91.6}{($\pm$6.3)}   &    \ss{83.1}{($\pm$1.6)}   &   \ss{89.8}{($\pm$1.3)}   &        \ss{82.8}{($\pm$1.4)}    &    \ss{88.9}{($\pm$1.1)}       \\
    \ss{gBoost}{~}                     &    \ss{77.1}{($\pm$2.7)}    &    \ss{73.6}{($\pm$4.9)}        &   \fss{91.4}{($\pm$5.8)}  &   \fss{93.9}{($\pm$5.0)}   &    \ss{82.7}{($\pm$2.2)}   &   \ss{83.9}{($\pm$2.2)}   &        \ss{81.3}{($\pm$1.4)}    &    \ss{81.8}{($\pm$2.6)}       \\
    \hline
    \multicolumn{9}{l}{\textit{Reported values in literature}} \\
    L1-LogReg \cite{takigawa:2017}     &        78.3                 &        -                     &        -               &        -                &        -                   &  -                        &        -                        &        -                       \\

    MGK \cite{Saigo:2009}              &        76.5                 &        75.6                     &        80.8               &        90.1                &        -                   &  -                        &        -                        &        -                       \\
    freqSVM \cite{Saigo:2009}	       &        77.8                 &        84.5                     &        80.8               &        90.6                &        -                   &  -                        &        -                        &        -                       \\
    gBoost \cite{Saigo:2009}	       &        78.8                 &   \fst{85.4}                    &        85.2               &        92.6                &        -                   &  -                        &        -                        &        -                       \\ 
    WL shortest path \cite{Shrvashidze:2011}        &        -                    &        -                        &        83.7               &        -                   &        84.5                &  -                        &        -                        &        -                       \\
    Random walk \cite{Shrvashidze:2011}             &        -                    &        -                        &        80.7               &        -                   &        64.3                &  -                        &        -                        &        -                       \\
    Shortest path \cite{Shrvashidze:2011}           &        -                    &        -                        &        87.2               &        -                   &        73.4                &  -                        &        -                        &        -                       \\ \thickhline
  \end{tabular}
\end{table*}

\begin{table*}[t]
  \centering
  \caption{Best hyperparameters}
  \label{tbl:bestParam}
  \begin{tabular}{lllll}
    \thickhline
    ~                    & CPDB               & Mutag            & NCI1               & NCI47              \\  \hline
    Proposed             & $x4~d5~\eta0.1~k120$ & $x4~d1~\eta1~k22 $ & $x4~d5~\eta0.1~k452$ & $x4~d3~\eta0.4~k308$ \\  
    Proposed(d1)         & $x8~d1~\eta0.4~k128$ & $x4~d1~\eta1~k22 $ & $x4~d1~\eta0.4~k499$ & $x4~d1~\eta0.4~k499$ \\ 
    gBoost               & $x8~\nu0.5         $ & $x7~\nu0.1       $ & $x8~\nu0.3         $ & $x8~\nu0.4         $ \\  \thickhline
  \end{tabular}
\end{table*}

\begin{figure}[t]
  \centering
  \includegraphics[width=0.5\textwidth]{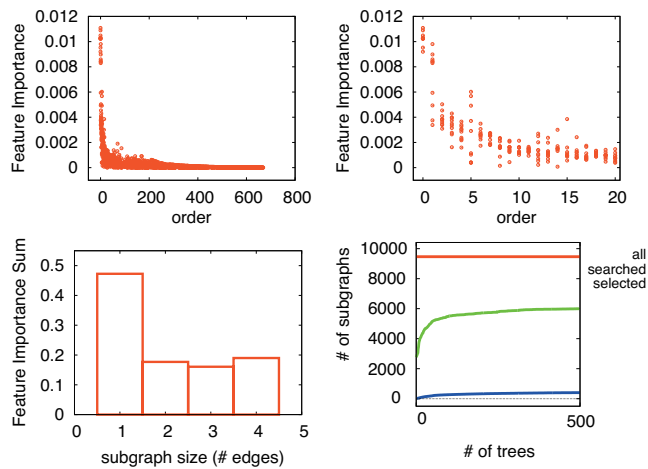}
  \caption{Feature importance and search space for CPDB}
  \label{fig:FInSS}
\end{figure}


\subsection{Scalability comparison to Na\"ive approach}\label{sec:scalability}

As previously mentioned in Section~\ref{sec:previousModel}, there exists a simple na\"ive two-step approach to obtain nonlinear models of subgraph indicators. Figure~\ref{fig:scalability} shows the scalability of this ``enumerate \& learn'' approach by first enumerating all small-size subgraphs and applying general supervised learning to their indicators. The values in the figure are the average values to process each fold in 10-fold cross validation on a single PC with Pentium G4560 3.50GHz and 8GB memory. We enumerate all subgraphs with limited subgraph size\footnote{Small-size subgraphs are known to be more appropriate for this supervised-learning purpose than frequent subgraphs \cite{Wale:2008}.}, and feed their indicator features to GradientBoostingClassifier with 100 trees (depth $\leqslant$ 5) of scikit-learn\footnote{\url{http://scikit-learn.org}}. The proposed method is also tested with the same setting (100 trees, d5). Since this case both use GTB and thus the performance is the same in principle up to implementation details (empirically both 0.75-0.77 for this setting), we focus on scalability comparisons using the fixed hyperparameters. Because the number of subgraph patterns to be enumerated increases exponentially, off-the-shelf packages such as scikit-learn cannot handle them at some point even when pattern enumeration can be done. In Figure~\ref{fig:scalability}, we can observe pattern enumeration can be done for max subgraph size = 1 to 12 (green line, left), but the 2nd scikit-learn step fails for max subgraph size $\geqslant 10$ (green line, right). In this CPDB examples, the numbers of subgraphs, i.e., the dimensions of feature vectors, were 66336.1, 145903.7, 275422.3, 512904.1, 874540.0 for max subgraph size = 8, 9, 10, 11, 12, respectively, and scikit-learn was only feasible for max subgraph size up to 9. Note that we also need to solve a large number of subgraph isomorphism known to be NP-complete.

\begin{figure}[t]
 \centering
  \includegraphics[width=0.49\textwidth]{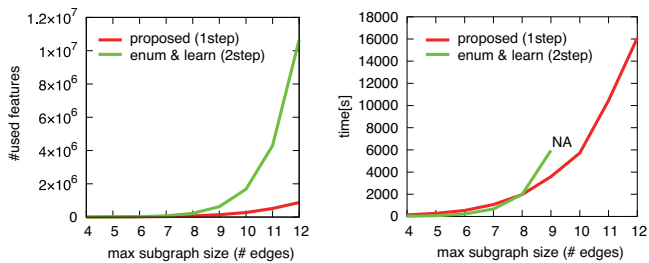}
   \caption{Scalability comparison to na\"ive approach}
   \label{fig:scalability}
\end{figure}

\section{Conclusions}

In summary, we investigated nonlinear models with all possible
subgraph indicators and provide a novel efficient algorithm to learn
from the nonlinear hypothesis space. We demonstrated that this hypothesis space is
identical to the (pseudo-Boolean) functions of these subgraph indicators,
which are, in general, strictly larger than those of the linear models.
This is also empirically confirmed through our Graph-XOR example.
Although most existing studies focus only on real datasets, this would also promote interest in whether graph-theoretic
classification problems can be approximated in a supervised learning manner. 
At the same time, the experimental results of the present study
also strongly suggest that we need a nonlinear hypothesis space for the
QSAR problems based on some real datasets, which would also support a
standard cheminformatics approach of applying nonlinear models, such as
random forests and neural networks, to 0-1 feature vectors, referred to as
\textit{molecular fingerprints}, by the existence of substructural features.

Since research on classification and regression trees originates
from the problem of \textit{automatic interaction detection}
\cite{MorganSonquist:1963, Kass:1975, breiman84}, our approach can provide insights on the question
whether such higher-order interactions between input features exist. 
In this sense, our methods and findings would also be informative to consider 
a recent hot topic of detecting such interactions in combinatorial data \cite{
  terada2013statistical, sugiyama2015significant, nakagawa2015safe}.

\bibliographystyle{ACM-Reference-Format}
\bibliography{all}

\appendix

\section{Proof of Theorem \ref{thm:bound}}
\label{sec:boundLinear}

\renewcommand{\a}{\bar{a}}
\renewcommand{\b}{\bar{b}}
\newcommand{\mysum}[1]{\sum_{i \in[#1]}}

\begin{proof}
  Given $\D_1(g)$ and $\D_0(g)$,
  \small
  \begin{align}
    \bound &= \mymin{g'} \bigl[ \tss(\D_1(g')) + \tss(\D_0(g'))  \bigr] \notag \\
    \label{eq:boundSubset}
    &= \mymin{S \subset \D_1(g)} \bigl[ \tss(\D_1(g) \setminus S) + \tss(\D_0(g) \cup S)  \bigr] \\
    \label{eq:linearBound}
    &= \mymin{(\diamond,k)} \Big[ \tss(\D_1(g) \setminus S_{\diamond,k}) + \tss(\D_0(g) \cup S_{\diamond,k}) \Big]
  \end{align}\normalsize
  where $ (\diamond, k) \in \{ \leq, > \} \times \{ 2, \dots, |\D_1(g) - 1| \} $.
  From the anti-monotone property \eqref{eq:propSubgraph}, we have
  $\D_1(g') \subseteq \D_1(g)$ for $g' \sqsupseteq g$
  for the training set $\D$ from which the equation \eqref{eq:boundSubset}
  directly follows. Thus, we show \eqref{eq:linearBound} in
  detail. For simplicity,
  let $A =,\{ a_1, \dots, a_n \mid a_i \in \mathbb{R} \}$ denote $\D_1(g)$,
  and $B = \{ b_1, \dots, b_m \mid b_i \in \mathbb{R} \}$ denote $\D_0(g)$.
  Then, the goal of \eqref{eq:boundSubset} is to minimize the total sum
  of squares $\tss(A \setminus S) + \tss(B \cup S)$ by tweaking
  $S = \{ s_1, \dots, s_k \} \subset A$.
  Let $\a$, $\a_{-S}$, $\b$, and $\b_{+S}$ be the means of $A$, $A \setminus S$,
  $B$, and $B \cup S$, respectively.
  The key fact is that $\tss(A \setminus S) + \tss(B \cup S)$ can
  be regarded as a quadratic equation of $\sum_{i=1}^k s_i$ when the size
  of $S$ is fixed to $k$. More precisely,
  \begingroup
  \allowdisplaybreaks
      {\small
	\begin{align*}
	  &\tss(A \setminus S) + \tss(B \cup S) \\
	  &= \mysum{n} (a_i - \a_{-S})^2 - \mysum{k} (s_i - \a_{-S})^2 + \mysum{m} (b_i - \a_{+S})^2 + \mysum{k} (s_i - \a_{+S})^2 \\
	  &= - \mysum{k} (s_i - \a)^2 - \frac{\big( \mysum{k} (s_i - \a) \big)^2}{n-k} + \mysum{n} (a_i - \a)^2 \\
	  & \qquad + \mysum{k} (s_i - \b)^2 - \frac{\big( \mysum{k} (s_i - \b) \big)^2}{m+k} + \mysum{m} (b_i - \b)^2 \\
	  & = - \left( \frac{1}{n-k} + \frac{1}{m+k} \right) \Big( \mysum{k} s_i \Big)^2 + \left( 2\a \frac{n}{n-k} - 2\b \frac{m}{m+k} \right) \mysum{k} s_i \\
	  & \qquad- \frac{nk}{n-k} \a^2 + \frac{mk}{m+k} \b^2 + \mysum{n} (a_i - \a)^2 + \mysum{m} (b_i - \b)^2
	\end{align*}
      }
      \endgroup
      Therefore, $\tss(A \setminus S) + \tss(B \cup S)$ is minimized
      when $\sum_{i=1}^k s_i$ is maximized or minimized. In other words,
      \eqref{eq:boundSubset} becomes minimum when the mean of $S \subset
      \D_1(g)$ is maximized or minimized.
\end{proof}

\end{document}